\documentclass[lettersize,journal]{IEEEtran}

\usepackage{amsmath,amssymb,amsfonts}
\usepackage{bm}

\usepackage{algorithm}
\usepackage{algpseudocode}

\usepackage[caption=false,font=normalsize,labelfont=sf,textfont=sf]{subfig}
\usepackage{graphicx}
\usepackage{array}
\usepackage{stfloats}

\usepackage{cite} 

\usepackage{hyperref}
\usepackage{cleveref}

\usepackage{amsthm}
\theoremstyle{plain}
\newtheorem{theorem}{Theorem}

\newtheorem{corollary}{Corollary}
\theoremstyle{definition}

\newtheorem{assumption}{Assumption}
\theoremstyle{remark}

\usepackage[dvipsnames]{xcolor}

\usepackage{bbm}             

\usepackage{tikz} 
\usetikzlibrary{arrows.meta, positioning, shapes.misc, calc, shapes.geometric, fit}

\usepackage{booktabs}  
\usepackage{cuted}

\usepackage{makecell}
\usepackage{multirow}

\author{
Jinlun Zhang, Haoneng Huang, Zishu Zhan, and Chunquan Ou%
\thanks{Corresponding authors: Zishu Zhan (e-mail: treer0927@smu.edu.cn) and Chunquan Ou (e-mail: ouchunquan@hotmail.com).}%
\thanks{Jinlun Zhang, Haoneng Huang, Zishu Zhan, and Chun-quan Ou are with the State Key Laboratory of Multi-organ Injury Prevention and Treatment, Department of Biostatistics, Guangdong Provincial Key Laboratory of Tropical Disease Research, School of Public Health, Southern Medical University, Guangzhou, China.}%
}

\begin{document}

\title{Distributional Causal Mediation via Conditional Generative Modeling}

\maketitle

\begin{abstract}
Causal mediation analysis characterizes how an exposure or treatment affects an outcome through intermediate variables. However, most existing mediation methods target low dimensional summaries, especially mean effects, and may therefore miss changes in dispersion, tail behavior, modality, and other features of the outcome distribution. We propose Distributional Causal Mediation Analysis (DCMA), a conditional generative framework for interventional mediation analysis at the level of interventional outcome distributions with multiple mediators. DCMA defines total, direct, indirect, and path-specific interventional estimands as user specified functionals of pairs of interventional outcome distributions, encompassing mean, quantile, exceedance risk, and discrepancy based summaries such as the energy distance and Wasserstein distance. We establish identification formulas under standard interventional mediation assumptions, representing the target interventional outcome distributions as functionals of the observed joint mediator distribution \(P(\bm M\mid A,\bm Z)\), the conditional outcome distribution \(P(Y\mid A,\bm M,\bm Z)\), and the covariate distribution. DCMA estimates these conditional distributions using noise driven conditional generators and reconstructs the target interventional outcome distributions by Monte Carlo forward simulation, enabling flexible estimation without explicit likelihood specification. We further derive a structural error decomposition showing how errors in the learned conditional mediator and outcome distributions propagate to the reconstructed interventional outcome distributions. Synthetic and semi-synthetic experiments with known interventional targets show that DCMA recovers mediation patterns involving bimodality, dispersion changes, tail risk, and mediator-specific distributional patterns. A NHANES liver elastography application illustrates distributional mediation analysis in observational data.
\end{abstract}

\begin{IEEEkeywords}
Causal Mediation Analysis, Interventional Mediation Effects, Conditional Generative Modeling, Path-specific Effects,  Distributional Causal Inference, Energy Distance
\end{IEEEkeywords}

\section{Introduction}\label{sec:intro}

\IEEEPARstart{C}{ausal} mediation analysis aims to characterize how an exposure or treatment $A$ affects an outcome $Y$ through mediators $\bm M$~\cite{robins1992identifiability,imai2010general, vanderweele2014mediation}. Depending on the scientific objective, mediation analysis can be studied from an explanatory perspective, which decomposes the total effect into direct and indirect pathways, or from an interventional perspective, which defines effects through hypothetical interventions on the exposure and mediator distributions~\cite{nguyen2021clarifying}. The interventional mediation framework is particularly well suited to policy relevant \textit{what-if} questions.

However, most existing approaches summarize effects using low dimensional contrasts, especially mean differences. Such summaries can be insufficient when the treatment changes the shape of the outcome distribution, motivating distributional causal targets beyond averages~\cite{kennedy2023semiparametric,kim2018causal}. For example, a tracking program may leave the class average unchanged while widening the gap between high and low performing students, resulting in a bimodal distribution. Beyond detecting such a distributional change, one may further ask how it can be attributed to the direct pathway and to specific mediator pathways through teacher support and instructional resources.

This paper develops Distributional Causal Mediation Analysis (DCMA), a framework for interventional mediation analysis at the level of interventional outcome distributions. DCMA reconstructs the interventional outcome distributions associated with total, direct, indirect, and mediator-specific pathways, thereby enabling pathway-specific evaluation of location, dispersion, tail behavior, modality, and global distributional changes. We define distributional interventional total effects (ITEs), interventional direct effects (IDEs), interventional indirect effects (IIEs), and interventional path-specific effects (IPSEs) as user specified functionals of pairs of interventional outcome distributions. This formulation includes classical mean effects as a special case and also accommodates quantile effects, exceedance risk effects, and global distributional discrepancies such as the energy distance (ED)~\cite{rizzo2016energy} and the Wasserstein distance~\cite{villani2009optimal}.

Building on the standard identification formula for interventional mediation effects~\cite{vanderweele2014effect,vansteelandt2017interventional}, we move from expectations of potential outcomes to interventional outcome distributions. The resulting representation depends only on two observed conditional distributions: the mediator distribution \(P(\bm M\mid A,\bm Z)\) and the outcome distribution \(P(Y\mid A,\bm M,\bm Z)\). DCMA estimates these distributions using two noise driven conditional generators, namely a joint mediator generator and an outcome generator. It then samples mediators and outcomes from the fitted generators under the target intervention regime and reconstructs the corresponding interventional outcome distributions by Monte Carlo forward simulation. We analyze this reconstruction step through a structural error decomposition that separates mediator stage and outcome stage error components from the learned conditional distributions. The empirical studies evaluate recovery of interventional mediation estimands in synthetic and semi-synthetic settings with known ground truth and illustrate mean, threshold risk, and distributional mediation summaries in observational data.

The main contributions are summarized as follows:
\begin{itemize}
\item We introduce distributional interventional mediation estimands defined through interventional outcome distributions, extending mean based mediation analysis to quantile, exceedance risk, and discrepancy based summaries.

\item We develop a generative mediation framework that learns conditional mediator and outcome distributions with noise driven conditional generators and reconstructs direct, indirect, and mediator-specific interventional outcome distributions through Monte Carlo simulation.

\item We provide distribution level identification formulas and structural error bounds that quantify how mediator and outcome stage conditional distribution errors propagate to the reconstructed interventional outcome distributions.
\end{itemize}

\section{Related Work}\label{sec:related}

\subsection{Interventional Mediation Analysis}

Interventional mediation was formally introduced to enable effect decomposition in settings with exposure induced mediator--outcome confounding, where natural mediation effects may be non-identifiable~\cite{vanderweele2014effect}. By defining effects through interventions on mediator distributions rather than fixing mediators at individual level nested counterfactual values, this framework avoids the nested individual level cross-world counterfactuals required by natural direct and indirect effects and yields policy-relevant direct and indirect effect summaries.

Building on this formulation, subsequent work extended interventional mediation to increasingly complex settings, including multiple mediators with unknown causal structures~\cite{vansteelandt2017interventional, lin2017interventional,loh2022disentangling}, longitudinal settings with time-varying exposures and mediators~\cite{vanderweele2017mediation}, and nonlinear models with high-dimensional mediators~\cite{loh2022nonlinear}. Moreno-Betancur et al.~\cite{moreno2021mediation} further enriched the conceptual framework of interventional mediation by aligning it with target trial emulation, providing a policy oriented interpretation through hypothetical shifts in mediator distributions.

Along related lines, Benkeser and Ran~\cite{benkeser2021nonparametric} proposed nonparametric and semiparametric estimation and inference for interventional mediation effects. More recently, Zhou and Wodtke~\cite{zhou2025causal} developed a Monte Carlo based framework for mediation analysis with multiple mediators using flexible neural conditional distribution models. Their framework focuses on mean based mediation estimands, including ITEs, IDEs, IIEs, and path-specific effects under the natural effect framework. DCMA builds on this simulation based perspective, but reconstructs IPSEs under the interventional mediation framework and extends the target from mean effects to full interventional outcome distributions.

\subsection{Distributional Causal Inference}

Quantile and distributional treatment effects have been widely studied as alternatives to mean based causal summaries~\cite{imbens1997estimating,abadie2002bootstrap, athey2006identification,firpo2007efficient,rothe2010nonparametric, chernozhukov2013inference,byambadalai2025efficient}. Related work has also studied causal targets defined by functionals or discrepancies of counterfactual outcome distributions ~\cite{kennedy2023semiparametric,kim2018causal,kallus2023robust}. DCMA builds on this distributional perspective but shifts the focus to interventional mediation, reconstructing the interventional outcome distributions underlying ITEs, IDEs, IIEs, and IPSEs.

Kernel and embedding methods provide another way to represent distribution valued causal objects. Counterfactual mean embeddings and conditional mean embeddings have been used to represent counterfactual distributions and conditional distributional treatment effects ~\cite{muandet2021counterfactual,park2021conditional}. Singh et al.~\cite{singh2025sequential} use sequential kernel embeddings to estimate mediated and time-varying dose-response curves through nested \(g\)-formula functionals. DCMA instead uses conditional generators and Monte Carlo forward simulation for mediation-specific distribution reconstruction, whereas RKHS embeddings are used only as an analytic device for deriving the ED based mediator-stage and outcome stage error decomposition.

\section{Causal Setup and Distributional Estimands}
\label{sec:setup}

Let \(Y\) be the outcome, \(A\in\{0,1\}\) a binary treatment,
\(\bm M=(M_1,\ldots,M_S)\) a vector of \(S\) mediators, and \(\bm Z\) a vector
of baseline covariates. For treatment level $a$ and
mediator value $\bm m=(m_1,\dots,m_S)$, let $Y_{a\bm m}$ denote the
potential outcome if
\(A\) were set to \(a\) and \(\bm M\) were set to \(\bm m\), and let
\(\bm M_a\) denote the mediator vector if \(A\) were set to \(a\).

Interventional mediation estimands are defined by drawing mediator values from
treatment induced mediator distributions. We write
\[
\tilde{\bm M}_a \mid \bm Z=\bm z
\sim
P(\bm M_a\mid \bm Z=\bm z)
\]
for a random mediator vector drawn from the mediator distribution induced by
treatment level $a$ conditional on baseline covariates. Thus,
$Y_{a\tilde{\bm M}_{a'}}$ denotes the potential outcome under treatment level
$a$ when the mediator vector is drawn from the distribution induced by
treatment level $a'$.

For the purpose of defining IPSEs, it is convenient to partition the mediator
vector around the target mediator \(M_s\). We write
\[
\begin{aligned}
\tilde{\bm M}^{(<s)}_a
&=(\tilde M_{1a},\dots,\tilde M_{(s-1)a}),\\
\tilde{\bm M}^{(>s)}_a
&=(\tilde M_{(s+1)a},\dots,\tilde M_{Sa}).
\end{aligned}
\]
The ordering used here is an indexing convention for defining IPSEs, not an
assumption that the mediators follow a complete causal ordering.
Interventional mediation effects are defined through intervention distributions
assigned to the mediator vector, and therefore do not require specifying a
complete causal ordering among mediators~\cite{vansteelandt2017interventional}.
For graphical interpretation relative to the underlying mediator directed acyclic graph (DAG), the IPSE
for \(M_s\) can be viewed as the interventional contrast associated with paths
from \(A\) to \(Y\) whose last mediator before the outcome is \(M_s\). Figure~\ref{fig-Second} illustrates this idea in a two mediator DAG. 
The role of mediator indexing in IPSE
is further discussed in Appendix~\ref{app:ordering}.

\begin{figure}[htbp]
    \centering
    \resizebox{2.35in}{!}{%
        \begin{tikzpicture}[
            node distance=1cm and 1.5cm,
            font=\sffamily\scriptsize,
            observed/.style={rectangle, rounded corners, draw, thick, 
                              minimum width=0.8cm, minimum height=0.4cm, align=center},            
            >={Stealth[length=4pt]},
            every edge/.style={->, thin},
        ]
            \node[observed] (A) {$A$};
            \node[observed, above right=3mm and 5mm of A] (M1) {$M_1$};
            \node[observed, below right=3mm and 5mm of A] (M2) {$M_2$};
            \node[observed, below right=3mm and 6mm of M1] (Y) {$Y$};
            \node[observed, above left=5.5mm and 4.5mm of A] (Z) {$Z$};		

            \draw[->, thin, draw={rgb,255:red,31; green,119; blue,180}] (A) -- (M1);
            \draw[->, thin, draw={rgb,255:red,255; green,127; blue,14}] (A) -- (M2);
            \draw[->, thin, draw={rgb,255:red,44; green,160; blue,44}] (A) -- (Y);
            \draw[->, thin, draw={rgb,255:red,31; green,119; blue,180}] (M1) -- (Y);
            \draw[->, thin, draw={rgb,255:red,255; green,127; blue,14}] (M1) -- (M2);
            \draw[->, thin, draw={rgb,255:red,255; green,127; blue,14}] (M2) -- (Y);

            \draw[->, thin, draw=black!50] (Z) -- (A);
            \draw[->, thin, draw=black!50] (Z) to[bend left=10] (M1);
            \draw[->, thin, draw=black!50] (Z) to[bend right=40] (M2);
            \draw[->, thin, draw=black!50] (Z) to[bend left=45] (Y);
        \end{tikzpicture}
    }
    \caption{\label{fig-Second}%
    Causal directed acyclic graph (DAG) with exposure $A$, two mediators $M_1$, $M_2$, outcome $Y$, and baseline covariates $\bm{Z}$.  
    The green edge denotes the IDE,  
    the blue edges denote the IPSE through $M_1$, 
    and the orange edges denote the IPSE through $M_2$.}
\end{figure}

\subsection{Distributional Estimands}
\label{subsec:estimands}

Building on existing interventional mediation estimands~\cite{vanderweele2014effect,vansteelandt2017interventional}, we take the interventional outcome distributions as the primary targets and define
mediation estimands by applying user-specified functionals to these distributions.
Let \(\mathcal P(\mathbb R)\) denote the set of probability distributions on the outcome
space, and let
\[
\Psi:\mathcal P(\mathbb R)\times\mathcal P(\mathbb R)\to\mathbb R
\]
be a user-specified functional comparing two outcome distributions.

We define
the distributional ITE, IDE, IIE, and IPSE
through mediator $M_s$ as
\[
\begin{aligned}
\mathrm{ITE}^{\Psi}
&:=
\Psi\!\left(
P_{Y_{1\tilde{\bm M}_1}},
P_{Y_{0\tilde{\bm M}_0}}
\right),\\
\mathrm{IDE}^{\Psi}
&:=
\Psi\!\left(
P_{Y_{1\tilde{\bm M}_0}},
P_{Y_{0\tilde{\bm M}_0}}
\right),\\
\mathrm{IIE}^{\Psi}
&:=
\Psi\!\left(
P_{Y_{1\tilde{\bm M}_1}},
P_{Y_{1\tilde{\bm M}_0}}
\right),\\
\mathrm{IPSE}^{\Psi}_{s}
&:=
\Psi\!\left(
P_{Y_{1\tilde{\bm M}^{(<s)}_0 \tilde M_{s1}\tilde{\bm M}^{(>s)}_1}},
P_{Y_{1\tilde{\bm M}^{(<s)}_0 \tilde M_{s0}\tilde{\bm M}^{(>s)}_1}}
\right).
\end{aligned}
\]
Here \(P_{\cdot}\) denotes the corresponding interventional outcome
distribution. The functional \(\Psi\) can be chosen according to the
scientific target.

A first class of functionals is given by contrast type functionals,
\[
\Psi(P,Q)=T(P)-T(Q),
\]
where $T$ is a scalar summary of a distribution. If
$T(P)=\int y\,P(dy)$, the above definitions reduce to the usual mean based
interventional mediation effects. Different
choices of $T$ target different scalar features of the interventional outcome
distributions. For example, $T_c(P)=P(Y\ge c)$ gives an exceedance risk effect
at threshold $c$, $T_t(P)=F_P(t)$ gives a CDF-based effect at point $t$, and
\[
T_{\tau}(P)=F_P^{-1}(\tau)
=
\inf\{y:F_P(y)\ge \tau\},
\qquad 0<\tau<1,
\]
gives a quantile effect.

A second class is given by discrepancy based functionals,
\[
\Psi(P,Q)=D(P,Q),
\]
such as the ED~\cite{rizzo2016energy} or the Wasserstein
distance~\cite{villani2009optimal}. These discrepancy based summaries quantify global differences between
interventional outcome distributions, including changes in shape, spread, modality,
and tail behavior that may be missed by a single summary contrast.

\subsection{Identification}
\label{subsec:identification}

We extend the standard interventional mediation \(g\)-formula
~\cite{vanderweele2014effect,vansteelandt2017interventional} from mean effects
to interventional outcome distributions.
For a Borel set $B\subseteq\mathbb R$, define the observed-data outcome distribution
\[
K_a^z(B\mid \bm m)
:=
\mathbb P(Y\in B\mid A=a,\bm M=\bm m,\bm Z=\bm z),
\]
and the observed mediator distribution
\[
G_a^z(d\bm m)
:=
P_{\bm M\mid A=a,\bm Z=\bm z}(d\bm m).
\]

For mediator-specific effects, let
$G_a^{z,<s}$, $G_a^{z,s}$, and $G_a^{z,>s}$ denote the marginal distributions of
$\bm M^{(<s)}$, $M_s$, and $\bm M^{(>s)}$ under $G_a^z$, respectively. For
$r\in\{0,1\}$, define
\[
Q_{s,r}^z(d\bm m)
=
G_0^{z,<s}(d\bm m^{(<s)})
G_r^{z,s}(dm_s)
G_1^{z,>s}(d\bm m^{(>s)}),
\]
where $\bm m=(\bm m^{(<s)},m_s,\bm m^{(>s)})$. This distribution represents the
mediator distribution used in the IPSE definition through $M_s$.

Let
\[
\begin{aligned}
R_{a,a'}^z(B)
&:=
P(Y_{a\tilde{\bm M}_{a'}}\in B\mid \bm Z=\bm z),\\
R_{a,a'}(B)
&:=
\int R_{a,a'}^z(B)\,P_{\bm Z}(d\bm z)
\end{aligned}
\]
denote the conditional and marginal interventional outcome distributions for the
aggregate effects. For the IPSE through mediator \(M_s\), let
\[
\begin{aligned}
R_{s,r}^z(B)
&:=
P\!\left(
Y_{1,\tilde{\bm M}^{(<s)}_0,\tilde M_{sr},
\tilde{\bm M}^{(>s)}_1}\in B
\mid \bm Z=\bm z
\right),\\
R_{s,r}(B)
&:=
\int R_{s,r}^z(B)\,P_{\bm Z}(d\bm z)
\end{aligned}
\]
denote the corresponding conditional and marginal interventional outcome distributions.

We use the following standard assumptions to identify these interventional outcome distributions from the observed data.

\begin{assumption}[Consistency]
\label{ass:consistency}
For any $a\in\{0,1\}$ and mediator value $\bm m$, if
$(A,\bm M)=(a,\bm m)$, then $Y=Y_{a\bm m}$. Moreover, if $A=a$, then
$\bm M=\bm M_a$.
\end{assumption}

\begin{assumption}[Positivity]
\label{ass:positivity}
For \(P_{\bm Z}\)-almost every \(\bm z\) and each \(a\in\{0,1\}\),
\[
0<\mathbb P(A=a\mid \bm Z=\bm z)<1.
\]
The mediator values used to evaluate \(K_a^z(\cdot\mid\bm m)\) in the
identification formula must lie in the support of
\(P(\bm M\mid A=a,\bm Z=\bm z)\).
\end{assumption}

\begin{assumption}[Sequential ignorability]
\label{ass:si}
For all relevant $a\in\{0,1\}$ and mediator values $\bm m$,
\[
\begin{aligned}
Y_{a\bm m} &\perp\!\!\!\perp A \mid \bm Z,\\
\bm M_a &\perp\!\!\!\perp A \mid \bm Z,\\
Y_{a\bm m} &\perp\!\!\!\perp \bm M \mid A,\bm Z .
\end{aligned}
\]
\end{assumption}

Assumption~\ref{ass:si} rules out unmeasured confounding of the
treatment-outcome, treatment-mediator, and mediator-outcome relationships
after adjustment for baseline covariates. These are standard identifying
conditions in interventional mediation analysis~\cite{vanderweele2014effect,vansteelandt2017interventional}. Their plausibility depends
on the study design and the adequacy of the measured covariates.

\begin{theorem}[Identification of interventional outcome distributions]
\label{thm:id-main}
Under Assumptions~\ref{ass:consistency}--\ref{ass:si}, for any
\(a,a'\in\{0,1\}\),
\[
R_{a,a'}^z(B)
=
\int K_a^z(B\mid \bm m)\,G_{a'}^z(d\bm m).
\]
For the IPSE through mediator \(M_s\), the outcome treatment is fixed at
\(1\). For \(r\in\{0,1\}\),
\[
R_{s,r}^z(B)
=
\int K_1^z(B\mid \bm m)\,Q_{s,r}^z(d\bm m).
\]
Consequently, the corresponding marginal distributions \(R_{a,a'}\) and \(R_{s,r}\)
are identified by integrating the conditional distributions over \(P_{\bm Z}\). For any
\(\Psi\) that is well defined on these distributions,
\[
\begin{aligned}
\mathrm{ITE}^{\Psi}
&=
\Psi(R_{1,1},R_{0,0}),\\
\mathrm{IDE}^{\Psi}
&=
\Psi(R_{1,0},R_{0,0}),\\
\mathrm{IIE}^{\Psi}
&=
\Psi(R_{1,1},R_{1,0}),\\
\mathrm{IPSE}_{s}^{\Psi}
&=
\Psi(R_{s,1},R_{s,0}).
\end{aligned}
\]
\end{theorem}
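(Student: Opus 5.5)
The plan is to make explicit what the random draw $\tilde{\bm M}_{a'}$ means, and then run a conditional $g$-formula argument at the level of distributions. First we formalize the interventional draw. Conditional on $\bm Z=\bm z$, $\tilde{\bm M}_{a'}$ is a random vector with law $P(\bm M_{a'}\mid\bm Z=\bm z)$, generated independently of the potential outcome process $\{Y_{a\bm m}\}_{\bm m}$ given $\bm Z$. This is the standard convention behind interventional estimands~\cite{vanderweele2014effect,vansteelandt2017interventional}, and we state it explicitly at the start of the proof. With this convention, conditioning on $\tilde{\bm M}_{a'}=\bm m$ and using independence gives
\[
R_{a,a'}^z(B)=\int \mathbb P(Y_{a\bm m}\in B\mid \bm Z=\bm z)\,P_{\bm M_{a'}\mid \bm Z=\bm z}(d\bm m).
\]
This reduces the claim to identifying two ingredients: the mediator law and the kernel $\bm m\mapsto \mathbb P(Y_{a\bm m}\in B\mid\bm Z=\bm z)$.

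Next we identify each ingredient from the observed data.

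\emph{Mediator law.} Using $\bm M_{a'}\perp\!\!\!\perp A\mid\bm Z$ and then consistency, we get $P(\bm M_{a'}\mid\bm Z=\bm z)=P(\bm M_{a'}\mid A=a',\bm Z=\bm z)=G_{a'}^z$. Positivity ensures that conditioning on $A=a'$ is well defined for $P_{\bm Z}$-a.e.\ $\bm z$.

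\emph{Outcome kernel.} Starting from $\mathbb P(Y_{a\bm m}\in B\mid \bm Z=\bm z)$, we proceed in three moves:
\begin{itemize}
\item use $Y_{a\bm m}\perp\!\!\!\perp A\mid\bm Z$ to condition on $A=a$;
\item use $Y_{a\bm m}\perp\!\!\!\perp\bm M\mid A,\bm Z$ to further condition on $\bm M=\bm m$;
\item apply consistency to replace $Y_{a\bm m}$ by $Y$.
\end{itemize}
This yields $K_a^z(B\mid\bm m)$. Substituting both pieces into the display gives the first formula.

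The IPSE formula follows by the same argument with $a=1$ and the mediator vector $(\tilde{\bm M}^{(<s)}_0,\tilde M_{sr},\tilde{\bm M}^{(>s)}_1)$. Here we read the definition as drawing the three blocks independently from their respective marginals given $\bm Z$, which is exactly what the product form of $Q_{s,r}^z$ encodes. Each marginal is identified as above, for example $P(\bm M^{(<s)}_0\mid\bm z)=G_0^{z,<s}$, so the joint draw has law $Q_{s,r}^z$. Integrating the outcome kernel against it gives the second formula.

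The marginal statements follow by integrating against $P_{\bm Z}$; Fubini applies since the integrands are bounded by $1$. The statements about $\mathrm{ITE}^\Psi$, $\mathrm{IDE}^\Psi$, $\mathrm{IIE}^\Psi$ and $\mathrm{IPSE}^\Psi_s$ are then immediate: each effect was defined as $\Psi$ applied to a pair of these distributions, and the distributions are now identified.

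The main obstacle is measure-theoretic rather than conceptual. The kernel $\bm m\mapsto K_a^z(B\mid\bm m)$ is defined only $G_a^z$-a.e., yet it is integrated against $G_{a'}^z$ or $Q_{s,r}^z$, which may put mass where $G_a^z$ does not. Positivity is what resolves this: it states that the relevant mediator values lie in the support of $P(\bm M\mid A=a,\bm Z=\bm z)$. We interpret this as absolute continuity of $G_{a'}^z$ and $Q_{s,r}^z$ with respect to $G_a^z$, or take a regular version of the conditional law. Under that reading the integral does not depend on the version chosen, and the substitution in the kernel step is valid for a.e.\ $\bm m$ with respect to the integrating measure.
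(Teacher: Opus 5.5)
Your proposal is correct and follows essentially the same standard $g$-formula route that the paper's proof takes. You disintegrate $R_{a,a'}^z$ over the independently drawn mediator, and identify $P(\bm M_{a'}\mid\bm Z=\bm z)=G_{a'}^z$ via $\bm M_{a'}\perp\!\!\!\perp A\mid\bm Z$ and consistency. You identify $\mathbb P(Y_{a\bm m}\in B\mid\bm Z=\bm z)=K_a^z(B\mid\bm m)$ via the two outcome ignorability conditions and consistency. For the IPSE you use the block-wise independent product $Q_{s,r}^z$, and you finish by integrating over $P_{\bm Z}$ and applying $\Psi$.

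Two of your choices make explicit what the paper's argument uses only implicitly. One is stating that the interventional draws are independent of $\{Y_{a\bm m}\}_{\bm m}$ given $\bm Z$. The other is reading positivity as $G_{a'}^z\ll G_a^z$ and $Q_{s,r}^z\ll G_1^z$; this is slightly stronger than the support condition as literally stated, but it is what is needed for the integrals to be version-independent.
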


The fully expanded IPSE formula and the proof of
Theorem~\ref{thm:id-main} are given in the
Supplementary Material. The identification formulas show that the target
interventional outcome distributions are determined by the mediator distribution
\(P(\bm M\mid A,\bm Z)\), the outcome distribution
\(P(Y\mid A,\bm M,\bm Z)\), and the covariate distribution. Estimation of
distributional mediation effects is therefore reduced to conditional
distribution learning, motivating the generative plug-in estimator developed
next.

\section{DCMA Framework}
\label{sec:generative}

DCMA has two components: conditional distribution learning and interventional
reconstruction. It first learns the mediator and outcome conditional distributions with
noise driven generators, and then samples from the fitted generators under the
target intervention regimes to reconstruct interventional outcome distributions by
Monte Carlo forward simulation. The corresponding plug-in estimates are obtained by applying \(\Psi\) to these reconstructed distributions.
Figure~\ref{fig:pipeline} gives an overview of the workflow.

\begin{figure}[t]
    \centering
    \includegraphics[width=0.85\linewidth]{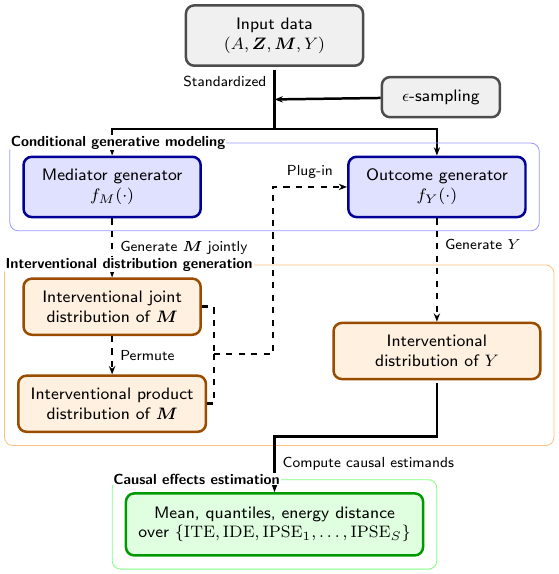}
    \caption{\label{fig:pipeline}
    Overview of the DCMA framework. Conditional generators learn the mediator and outcome conditional distributions, and Monte Carlo forward simulation reconstructs the target interventional outcome distributions.}
\end{figure}

\subsection{Conditional Generators}

A noise driven conditional generator represents a conditional distribution through a measurable map \(g(X,\varepsilon)\), where \(X\) denotes the conditioning variables and \(\varepsilon\) is an exogenous noise vector sampled independently of \(X\). In DCMA, this template is instantiated twice, once for the mediator conditional
distribution and once for the outcome conditional distribution. This conditional distribution learning strategy is closely related to deep
conditional sampling, Wasserstein generative regression (WGR), and engression
~\cite{zhou2023deep,song2026wasserstein,shen2025engression}.

Let \(\bm\varepsilon_M\sim P_{\bm\varepsilon_M}\) be an exogenous noise vector sampled independently of the generator inputs \((a,\bm z)\). Given \((a,\bm z)\), drawing \(\bm\varepsilon_M\) and evaluating
\(f_M(a,\bm z,\bm\varepsilon_M)\) induces a conditional distribution for the mediator vector,
\[
P_{f_M}(\cdot\mid a,\bm z)
=
\mathcal L\{f_M(a,\bm z,\bm\varepsilon_M)\},
\]
where \(\mathcal L(\cdot)\) denotes the distribution of a random variable. At the population optimum, the induced distribution matches the observed-data mediator conditional distribution,
\[
P_{f_M}(\cdot\mid a,\bm z)
=
P_{\bm M\mid A=a,\bm Z=\bm z},
\]
for almost every $(a,\bm z)$.
The mediator generator is fitted jointly for
\(\bm M=(M_1,\dots,M_S)\), so that generated draws can preserve conditional
dependence among mediators given treatment and covariates.

Similarly, with exogenous noise
\(\bm\varepsilon_Y\sim P_{\bm\varepsilon_Y}\) sampled independently of
\((a,\bm m,\bm z)\), the outcome generator induces a conditional distribution for the outcome,
\[
P_{f_Y}(\cdot\mid a,\bm m,\bm z)
=
\mathcal L\{f_Y(a,\bm m,\bm z,\bm\varepsilon_Y)\},
\]
with population optimum

\subsection{Training Objective}

Let \(\mathcal D(P,Q)\) denote a population level discrepancy between
probability distributions satisfying
\(\mathcal D(P,Q)\ge 0\) and \(\mathcal D(P,Q)=0\) if and only if \(P=Q\) on the relevant class of distributions.
The population mediator and outcome objectives are
\[
\begin{aligned}
\mathcal L_M(f_M)
&=
\mathbb E\!\left[
\mathcal D\{P_{f_M}(\cdot\mid A,\bm Z),
P_{\bm M\mid A,\bm Z}\}
\right],\\
\mathcal L_Y(f_Y)
&=
\mathbb E\!\left[
\mathcal D\{P_{f_Y}(\cdot\mid A,\bm M,\bm Z),
P_{Y\mid A,\bm M,\bm Z}\}
\right].
\end{aligned}
\]
If the generator classes \(\mathcal M\) and \(\mathcal Y\) are sufficiently rich and the population objectives are minimized exactly to zero, the population optimal generators satisfy
\[
\begin{aligned}
P_{f_M^\ast}(\cdot \mid a,\bm z)
&=
P_{\bm M \mid A=a,\bm Z=\bm z},\\
P_{f_Y^\ast}(\cdot \mid a,\bm m,\bm z)
&=
P_{Y \mid A=a,\bm M=\bm m,\bm Z=\bm z},
\end{aligned}
\]
for almost all relevant conditioning values. 

The main implementation uses an energy score (ES) based conditional generative loss.
For
$P\in\mathcal P_1(\mathbb R^d)$ and observation $x$, define
\[
S_{\rm ES}(P,x)
=
\mathbb E_{U\sim P}\|U-x\|
-
\frac12
\mathbb E_{U,U'\sim P}\|U-U'\|,
\]
where $U$ and $U'$ are independent draws from $P$. The ES is strictly
proper~\cite{gneiting2007strictly}. Specifically, for any true distribution
$Q\in\mathcal P_1(\mathbb R^d)$,
\[
\mathbb E_{X\sim Q}S_{\rm ES}(Q,X)
\le
\mathbb E_{X\sim Q}S_{\rm ES}(P,X),
\]
with equality if and only if $P=Q$.

Given $L\ge 2$ independent noise draws
$\bm\varepsilon_{i1}^{M},\ldots,\bm\varepsilon_{iL}^{M}
\overset{\mathrm{i.i.d.}}{\sim} P_{\bm\varepsilon_M}$ for each observation
$i$, the Monte Carlo approximation for the mediator stage is
\[
\widehat S_{{\rm ES},i}^{M}
=
\frac1L\sum_{\ell=1}^L
\|U_{i\ell}-\bm M_i\|
-
\frac{1}{2L(L-1)}
\sum_{\substack{\ell,\ell'=1\\ \ell\neq \ell'}}^L
\|U_{i\ell}-U_{i\ell'}\|,
\]
where
\[
U_{i\ell}
=
f_M(A_i,\bm Z_i,\bm\varepsilon_{i\ell}^{M}).
\]
The empirical mediator loss is
\[
\widehat{\mathcal L}_M(f_M)
=
\frac1n\sum_{i=1}^n
\widehat S_{{\rm ES},i}^{M}.
\]

The outcome loss is defined analogously. Given
\[
V_{i\ell}
=
f_Y(A_i,\bm M_i,\bm Z_i,\bm\varepsilon_{i\ell}^{Y}),
\qquad
\bm\varepsilon_{i\ell}^{Y}
\overset{\mathrm{i.i.d.}}{\sim} P_{\bm\varepsilon_Y},
\]
for \(\ell=1,\ldots,L\), define
\[
\widehat S_{{\rm ES},i}^{Y}
=
\frac1L\sum_{\ell=1}^L
\|V_{i\ell}-Y_i\|
-
\frac{1}{2L(L-1)}
\sum_{\substack{\ell,\ell'=1\\ \ell\neq \ell'}}^L
\|V_{i\ell}-V_{i\ell'}\|.
\]
The empirical outcome loss is
\[
\widehat{\mathcal L}_Y(f_Y)
=
\frac1n\sum_{i=1}^n
\widehat S_{{\rm ES},i}^{Y}.
\]

Other conditional generative losses can be used without changing the
interventional reconstruction step. As an example, we implement a WGR-based version of DCMA~\cite{song2026wasserstein}, which replaces the ES loss while keeping the same reconstruction procedure.

\subsection{Monte Carlo Reconstruction}

After training, let $\widehat f_M$ and $\widehat f_Y$ denote the fitted
generators. For each observed covariate vector $\bm Z_i$ and Monte Carlo draw
$b=1,\dots,B$, generate
\[
\widehat{\bm M}_{a}^{(i,b)}
=
\widehat f_M(a,\bm Z_i,\bm\varepsilon_{M,a}^{(i,b)}),
\qquad a\in\{0,1\},
\]
with independently sampled noise. Then generate
\[
\widehat Y_{aa'}^{(i,b)}
=
\widehat f_Y
\left(
a,
\widehat{\bm M}_{a'}^{(i,b)},
\bm Z_i,
\bm\varepsilon_{Y,aa'}^{(i,b)}
\right),
\qquad a,a'\in\{0,1\}.
\]
For each pair $(a,a')$, define the empirical interventional outcome distribution
\[
\widehat P_{aa'}
=
\frac{1}{nB}
\sum_{i=1}^n\sum_{b=1}^B
\delta_{\widehat Y_{aa'}^{(i,b)}},
\]
where $\delta_y$ denotes the point mass at $y$. The plug-in estimators are
\[
\begin{aligned}
\widehat{\mathrm{ITE}}^\Psi
&=
\Psi(\widehat P_{11},\widehat P_{00}),\\
\widehat{\mathrm{IDE}}^\Psi
&=
\Psi(\widehat P_{10},\widehat P_{00}),\\
\widehat{\mathrm{IIE}}^\Psi
&=
\Psi(\widehat P_{11},\widehat P_{10}).
\end{aligned}
\]

For IPSE reconstruction, DCMA constructs mediator inputs according to the IPSE identification formula by combining mediator blocks generated under different treatment levels. For each fixed observed covariate vector \(\bm Z_i\), this product type construction is approximated by randomly permuting precomputed mediator draws over the Monte Carlo index.
For mediator $M_s$, let $\pi_{<}^{(i,s)}$ and
$\pi_{>}^{(i,s)}$ be independent random permutations of $\{1,\ldots,B\}$.
For $r\in\{0,1\}$, define
\[
\begin{aligned}
\widehat{\bm H}_{s,r}^{(i,b)}
=
\big(&
\widehat{\bm M}_{0}^{(i,\pi_{<}^{(i,s)}(b)),(<s)},
\widehat M_{r}^{(i,b),s},
\widehat{\bm M}_{1}^{(i,\pi_{>}^{(i,s)}(b)),(>s)}
\big).
\end{aligned}
\]
The corresponding outcome draw is
\[
\widehat Y_{s,r}^{(i,b)}
=
\widehat f_Y
\left(
1,
\widehat{\bm H}_{s,r}^{(i,b)},
\bm Z_i,
\bm\varepsilon_{Y,s,r}^{(i,b)}
\right).
\]
The empirical interventional outcome distribution is
\[
\widehat P_{s,r}
=
\frac{1}{nB}
\sum_{i=1}^n\sum_{b=1}^B
\delta_{\widehat Y_{s,r}^{(i,b)}},
\]
where \(\delta_y\) denotes the Dirac point mass at \(y\). The plug-in estimator
of the distributional IPSE for \(M_s\) is
\[
\widehat{\mathrm{IPSE}}_{s}^{\Psi}
=
\Psi(\widehat P_{s,1},\widehat P_{s,0}).
\]

\begin{algorithm}[t]
\caption{DCMA plug-in estimator}
\label{alg:dcma}
\begin{algorithmic}[1]
\Require Data $\{(Y_i,A_i,\bm M_i,\bm Z_i)\}_{i=1}^n$,
Monte Carlo size $B$, functional $\Psi$
\State Train $\widehat f_M$ by minimizing $\widehat{\mathcal L}_M$.
\State Train $\widehat f_Y$ by minimizing $\widehat{\mathcal L}_Y$.

\For{$i=1,\ldots,n$}
  \For{$b=1,\ldots,B$}
    \State Generate $\widehat{\bm M}_{0}^{(i,b)}$ and
    $\widehat{\bm M}_{1}^{(i,b)}$ from $\widehat f_M$.
    \State Generate $\widehat Y_{00}^{(i,b)}$,
    $\widehat Y_{10}^{(i,b)}$, and $\widehat Y_{11}^{(i,b)}$
    from $\widehat f_Y$.
  \EndFor

  \For{$s=1,\ldots,S$}
    \State Draw independent random permutations
    $\pi_{<}^{(i,s)}$ and $\pi_{>}^{(i,s)}$ of $\{1,\ldots,B\}$.
    \For{$b=1,\ldots,B$}
      \For{$r\in\{0,1\}$}
        \State Construct $\widehat{\bm H}_{s,r}^{(i,b)}$
        by combining the reordered mediator blocks.
        \State Generate $\widehat Y_{s,r}^{(i,b)}$ from $\widehat f_Y$
        evaluated at $(1,\widehat{\bm H}_{s,r}^{(i,b)},\bm Z_i,\bm\varepsilon_{Y,s,r}^{(i,b)})$.
      \EndFor
    \EndFor
  \EndFor
\EndFor

\State Form empirical outcome distributions $\widehat P_{aa'}$ and $\widehat P_{s,r}$.
\State Return
$\widehat{\mathrm{ITE}}^\Psi$,
$\widehat{\mathrm{IDE}}^\Psi$,
$\widehat{\mathrm{IIE}}^\Psi$, and
$\{\widehat{\mathrm{IPSE}}_{s}^{\Psi}\}_{s=1}^S$.
\end{algorithmic}
\end{algorithm}

\section{Error Analysis}
\label{sec:error}

This section studies how learning errors in the mediator and outcome conditional distributions propagate to reconstructed interventional outcome distributions. The analysis is based on the ED/RKHS representation, which is aligned with the ES-based training objective.

\subsection{Energy Distance and RKHS Representation}
\label{subsec:ed-es}

We first recall the ED and its RKHS representation. Let
\(\mathcal P_1(\mathbb R^d)\) denote the set of probability distributions with finite
first moments. For \(P,Q\in\mathcal P_1(\mathbb R^d)\), the ED is
\[
\begin{aligned}
\mathrm{ED}(P,Q)
=
2\mathbb E\|X-Y\|
-\mathbb E\|X-X'\|-\mathbb E\|Y-Y'\|,
\end{aligned}
\]
where $X,X'\sim P$ and $Y,Y'\sim Q$ are independent. The ED is
nonnegative and equals zero if and only if $P=Q$.

For the loss oriented ES defined in Section~\ref{sec:generative},
the excess risk satisfies
\[
\mathbb E_{Y\sim Q}S_{\rm ES}(P,Y)
-
\mathbb E_{Y\sim Q}S_{\rm ES}(Q,Y)
=
\frac12\,\mathrm{ED}(P,Q).
\]
Thus, ES-based conditional distribution learning is naturally aligned with
ED control of the learned conditional distributions.

The ED also admits a kernel mean embedding representation through the
equivalence between distance based and RKHS based statistics for semimetrics
of negative type~\cite{sejdinovic2013equivalence}. Specifically, let $k_Y$ be a positive definite kernel on the outcome space associated with the
negative type semimetric underlying the ED, and let $\mathcal H_Y$ be its
RKHS. Then
\[
\mathrm{ED}_Y(P,Q)
=
2\|\mu_Y(P)-\mu_Y(Q)\|_{\mathcal H_Y}^2,
\]
where
\[
\mu_Y(P)=\int k_Y(\cdot,y)\,P(dy).
\]
We write
\[
\mathsf d_Y(P,Q)
:=
\left\{\frac12\mathrm{ED}_Y(P,Q)\right\}^{1/2}
=
\|\mu_Y(P)-\mu_Y(Q)\|_{\mathcal H_Y}.
\]
Analogously, for mediator distributions, let $k_M$ be the corresponding kernel on the
mediator space and define
\[
\mathsf d_M(P,Q)
:=
\left\{\frac12\mathrm{ED}_M(P,Q)\right\}^{1/2}.
\]

\subsection{Conditional Interventional distribution Error}
\label{subsec:structural-decomp}

Fix a treatment level \(a\) and a covariate value \(\bm z\). Let \(H^z\) denote the mediator intervention distribution to be combined with the conditional outcome distribution \(K_a^z(\cdot\mid\bm m)\). For example, \(H^z=G_{a'}^z\) for ITE, IDE, or IIE, whereas \(H^z=Q_{s,r}^z\) for IPSE. Let \(\widehat H^z\) denote the corresponding distribution induced by the fitted mediator generator. Define the fitted conditional outcome distribution induced by the learned outcome generator as
\[
\widehat K_a^z(\cdot\mid \bm m)
=
\mathcal L\{\widehat f_Y(a,\bm m,\bm z,\bm\varepsilon_Y)\}.
\]
The true and estimated conditional interventional outcome distributions are
\[
\begin{aligned}
R^z
&=
\int K_a^z(\cdot\mid \bm m)\,H^z(d\bm m),\\
\widehat R^z
&=
\int \widehat K_a^z(\cdot\mid \bm m)\,\widehat H^z(d\bm m).
\end{aligned}
\]
Define the fitted conditional outcome embedding
\[
\widehat\phi_{a,z}(\bm m)
=
\mu_Y\{\widehat K_a^z(\cdot\mid \bm m)\}
\in \mathcal H_Y.
\]

We impose the following regularity conditions to state the structural error
decomposition.

\textbf{Condition E.} 
\begin{enumerate}
\item[(i)]
For all relevant mediator values $\bm m$,
\[
K_a^z(\cdot\mid \bm m),\,
\widehat K_a^z(\cdot\mid \bm m)
\in
\mathcal P_1(\mathbb R).
\]
Their kernel mean embeddings are measurable and integrable under the mediator
distributions used in the decomposition.

\item[(ii)]
The mediator intervention distributions satisfy
\[
H^z,\widehat H^z\in \mathcal P_1(\mathbb R^S),
\]
so that $\mathsf d_M(\widehat H^z,H^z)$ is well defined.

\item[(iii)]
The learned outcome embedding map satisfies
\[
\widehat\phi_{a,z}\in \mathcal H_M\otimes\mathcal H_Y,
\qquad
\|\widehat\phi_{a,z}\|_{\mathcal H_M\otimes\mathcal H_Y}
\le L_\phi
\]
for some $L_\phi<\infty$.
\end{enumerate}

\begin{theorem}[Structural error bound]
\label{thm:ed-structural}
Under Condition E,
\[
\begin{aligned}
\mathsf d_Y(\widehat R^z,R^z)
&\le{}
L_\phi\,\mathsf d_M(\widehat H^z,H^z)\\
+&
\left[
\int
\mathsf d_Y^2\!\left(
\widehat K_a^z(\cdot\mid \bm m),
K_a^z(\cdot\mid \bm m)
\right)
H^z(d\bm m)
\right]^{1/2}.
\end{aligned}
\]
Consequently,
\[
\begin{aligned}
\mathrm{ED}_Y(\widehat R^z,R^z)
&\le{}
2L_\phi^2\,
\mathrm{ED}_M(\widehat H^z,H^z)\\
+&
2\int
\mathrm{ED}_Y\!\left(
\widehat K_a^z(\cdot\mid \bm m),
K_a^z(\cdot\mid \bm m)
\right)
H^z(d\bm m).
\end{aligned}
\]
\end{theorem}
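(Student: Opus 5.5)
The plan is to work entirely in the RKHS $\mathcal H_Y$. We use linearity of the kernel mean embedding under mixtures, then split the error by adding and subtracting a hybrid distribution. The first step is to observe that, by Condition E(i) and Fubini/Bochner integrability, the embedding of a mixture is the mixture of embeddings:
\[
\mu_Y(R^z)=\int \phi_{a,z}(\bm m)\,H^z(d\bm m),\qquad
\mu_Y(\widehat R^z)=\int \widehat\phi_{a,z}(\bm m)\,\widehat H^z(d\bm m),
\]
where $\phi_{a,z}(\bm m)=\mu_Y\{K_a^z(\cdot\mid\bm m)\}$. Next introduce the hybrid $\widetilde R^z=\int \widehat K_a^z(\cdot\mid\bm m)\,H^z(d\bm m)$, which pairs the fitted outcome kernel with the true mediator distribution. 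The triangle inequality in $\mathcal H_Y$ then gives
\[
\mathsf d_Y(\widehat R^z,R^z)\le \Bigl\|\int\widehat\phi_{a,z}\,d(\widehat H^z-H^z)\Bigr\|_{\mathcal H_Y}+\Bigl\|\int(\widehat\phi_{a,z}-\phi_{a,z})\,dH^z\Bigr\|_{\mathcal H_Y}.
\]

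The outcome-stage term is handled by Bochner's inequality $\|\int f\,dH\|\le\int\|f\|\,dH$, followed by Cauchy--Schwarz (Jensen) against the probability measure $H^z$. Since $\|\widehat\phi_{a,z}(\bm m)-\phi_{a,z}(\bm m)\|_{\mathcal H_Y}=\mathsf d_Y(\widehat K_a^z(\cdot\mid\bm m),K_a^z(\cdot\mid\bm m))$, this gives exactly the square-root integral term in the statement.

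The mediator-stage term is the main step. Here I would use Condition E(iii): identify $\mathcal H_M\otimes\mathcal H_Y$ with the Hilbert--Schmidt operators $\mathcal H_M\to\mathcal H_Y$, so that $\widehat\phi_{a,z}$ corresponds to an operator $\Phi$ with $\widehat\phi_{a,z}(\bm m)=\Phi\,k_M(\cdot,\bm m)$ by the reproducing property of the vector-valued RKHS. Linearity and Bochner integration then give
\[
\int\widehat\phi_{a,z}\,d(\widehat H^z-H^z)=\Phi\bigl(\mu_M(\widehat H^z)-\mu_M(H^z)\bigr).
\]
Since the operator norm is at most the Hilbert--Schmidt norm, which is at most $L_\phi$, the norm of this term is bounded by $L_\phi\|\mu_M(\widehat H^z)-\mu_M(H^z)\|_{\mathcal H_M}=L_\phi\,\mathsf d_M(\widehat H^z,H^z)$. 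This uses the ED/RKHS identity for the mediator kernel, with Condition E(ii) guaranteeing that the embeddings exist.

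The care needed is in justifying the interchange of the operator with the integral. A subtlety is that the kernels attached to energy distance are distance-induced, and $\mu_M$ is defined only for $\mathcal P_1$ measures, possibly after centering at a base point. Centering cancels in differences of embeddings of probability measures, so it is harmless. Finally, the ED form of the bound follows by squaring, using $(x+y)^2\le 2x^2+2y^2$ together with $\mathrm{ED}=2\mathsf d^2$. This gives $\mathrm{ED}_Y(\widehat R^z,R^z)=2\mathsf d_Y^2\le 4L_\phi^2\mathsf d_M^2+4\int\mathsf d_Y^2\,dH^z=2L_\phi^2\mathrm{ED}_M+2\int\mathrm{ED}_Y\,dH^z$, which matches the stated constants.
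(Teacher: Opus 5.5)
Your proposal is correct and follows essentially the same route as the paper. You embed the mixtures linearly and insert the hybrid $\int \widehat K_a^z(\cdot\mid\bm m)\,H^z(d\bm m)$; you then bound the mediator-stage term by $\|\widehat\phi_{a,z}\|_{\mathcal H_M\otimes\mathcal H_Y}\,\|\mu_M(\widehat H^z)-\mu_M(H^z)\|_{\mathcal H_M}\le L_\phi\,\mathsf d_M(\widehat H^z,H^z)$ through the tensor-product (Hilbert--Schmidt) structure, bound the outcome-stage term by Bochner's inequality plus Jensen, and square using $(x+y)^2\le 2x^2+2y^2$ with $\mathrm{ED}=2\mathsf d^2$, which yields the stated constants.
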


\begin{proof}
    The proof is given in the Supplementary Material.
\end{proof}

Theorem~\ref{thm:ed-structural} separates reconstruction error into two
sources. The first term is the mediator stage error, measuring how errors in the fitted mediator intervention distribution propagate through the fitted outcome mechanism. The second term is the outcome stage error, measuring the conditional outcome distribution error averaged under the target mediator intervention distribution.

\begin{corollary}[Marginal interventional distribution error]
\label{cor:marginal-distribution-error}
Let
\[
R
=
\int R^z\,P_{\bm Z}(d\bm z),
\qquad
\widehat R
=
\int \widehat R^z\,P_{\bm Z}(d\bm z)
\]
be the corresponding marginal interventional outcome distributions. Under the conditions
of Theorem~\ref{thm:ed-structural},
\[
\mathsf d_Y(\widehat R,R)
\le
\int
\mathsf d_Y(\widehat R^z,R^z)\,
P_{\bm Z}(d\bm z).
\]
Consequently, the mediator and outcome stage decomposition in Theorem~\ref{thm:ed-structural} carries over to marginal interventional outcome distributions after averaging over baseline covariates.
\end{corollary}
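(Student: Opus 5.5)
The plan is to work entirely in the RKHS $\mathcal H_Y$, where $\mathsf d_Y(P,Q)=\|\mu_Y(P)-\mu_Y(Q)\|_{\mathcal H_Y}$. The key observation is that the kernel mean embedding is linear under mixtures. By Condition E(i), the embeddings $\bm z\mapsto\mu_Y(R^z)$ and $\bm z\mapsto\mu_Y(\widehat R^z)$ are measurable and Bochner integrable. So we get
\[
\mu_Y(R)=\int \mu_Y(R^z)\,P_{\bm Z}(d\bm z),\qquad
\mu_Y(\widehat R)=\int \mu_Y(\widehat R^z)\,P_{\bm Z}(d\bm z).
\]
This identity follows by Fubini: for each $y'$,
\[
\int k_Y(y',y)\,R(dy)=\int\!\!\int k_Y(y',y)\,R^z(dy)\,P_{\bm Z}(d\bm z).
\]
The same step is justified in Hilbert space because the integrand is Bochner integrable. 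Integrability holds since, for the distance-induced kernel, $\|k_Y(\cdot,y)\|_{\mathcal H_Y}$ grows at most like $\|y-y_0\|^{1/2}$, and first moments are finite.

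With this, $\mu_Y(\widehat R)-\mu_Y(R)=\int\{\mu_Y(\widehat R^z)-\mu_Y(R^z)\}\,P_{\bm Z}(d\bm z)$. Applying the triangle inequality for Bochner integrals, $\|\int f\|\le\int\|f\|$, gives
\[
\mathsf d_Y(\widehat R,R)\le\int \mathsf d_Y(\widehat R^z,R^z)\,P_{\bm Z}(d\bm z),
\]
which is the claimed inequality.

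For the consequence, I will insert the bound of Theorem~\ref{thm:ed-structural} pointwise in $\bm z$. I then integrate, taking $L_\phi$ uniform in $\bm z$ as Condition E requires. The result is
\[
\mathsf d_Y(\widehat R,R)\le L_\phi\int \mathsf d_M(\widehat H^z,H^z)\,dP_{\bm Z}+\int\Big[\int\mathsf d_Y^2(\widehat K_a^z,K_a^z)\,dH^z\Big]^{1/2}dP_{\bm Z}.
\]
To obtain the squared ED version, I will apply Jensen or Cauchy--Schwarz, $(\int f)^2\le\int f^2$, together with $(x+y)^2\le 2x^2+2y^2$. This yields
\[
\mathrm{ED}_Y(\widehat R,R)\le 2L_\phi^2\int\mathrm{ED}_M(\widehat H^z,H^z)\,dP_{\bm Z}+2\int\!\!\int\mathrm{ED}_Y(\widehat K_a^z,K_a^z)\,dH^z\,dP_{\bm Z}.
\]
Some care is needed with the constant factors linking $\mathsf d$ and $\mathrm{ED}$. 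Since $\mathrm{ED}=2\mathsf d^2$, the constants match those in Theorem~\ref{thm:ed-structural}.

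The only genuinely delicate step is the measurability and Bochner integrability of $\bm z\mapsto\mu_Y(R^z)$, which is what licenses interchanging the embedding with the $P_{\bm Z}$-mixture. I expect to handle it by noting that $\mathcal H_Y$ is separable, because $k_Y$ is continuous on $\mathbb R$. Weak measurability, which comes from the measurability of $\bm z\mapsto\int k_Y(y',y)R^z(dy)$, then implies strong measurability by Pettis' theorem. The norm bound follows from the first-moment condition, assumed to hold uniformly after integration over $P_{\bm Z}$.
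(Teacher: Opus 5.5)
Your proof is correct and follows what is almost certainly the paper's route (the Supplementary Material containing it is not reproduced here): the kernel mean embedding is linear under the $P_{\bm Z}$-mixture, so $\mu_Y(\widehat R)-\mu_Y(R)=\int\{\mu_Y(\widehat R^{z})-\mu_Y(R^{z})\}\,P_{\bm Z}(d\bm z)$; the norm of a Bochner integral is at most the integral of the norm; and Theorem~\ref{thm:ed-structural} is then inserted pointwise in $\bm z$, with your constants in the squared ED version checking out. One attribution is off: Condition E is stated for a fixed $\bm z$, so the $\bm z$-measurability, the integrability giving $R,\widehat R\in\mathcal P_1(\mathbb R)$, and a $\bm z$-uniform $L_\phi$ are extra regularity you are rightly imposing, not consequences of E(i) and E(iii).
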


Plug-in error bounds for contrast type summaries and ED based discrepancy summaries are given in the Supplementary Material.

\section{Experiments}
\label{sec:experiments}

\subsection{Synthetic Experiments} \label{subsec:synthetic} 

\paragraph{S1: Bimodal outcome}

We consider a single mediator synthetic setting with a binary treatment $A \sim \mathrm{Bernoulli}(0.5)$, a baseline covariate $Z \sim N(0,1)$, and sample size $n=5000$. The mediator follows
\[
M = 0.5 + A + 0.3Z + \varepsilon_M,
\qquad
\varepsilon_M \sim N(0,0.5^2).
\]
The outcome model is designed to produce a bimodal marginal interventional outcome distribution under treatment. For untreated individuals,
\[
Y = 4.3 + 0.5M + 0.2Z + \varepsilon_{Y0}.
\]
For treated individuals, define $S=\mathbbm{1}(Z\le 0)$ and let
\[
Y =
\begin{cases}
2.3 + 0.5M + 0.2Z + \varepsilon_{Y1}, & S=0,\\
6.3 + 0.5M + 0.2Z + \varepsilon_{Y2}, & S=1,
\end{cases}
\]
where \[ \varepsilon_{Y0},\varepsilon_{Y1},\varepsilon_{Y2} \overset{\mathrm{i.i.d.}}{\sim}N(0,1), \] and all mediator and outcome error terms are mutually independent.

Oracle interventional outcome distributions are computed by Monte Carlo intervention under the known data generating mechanism, and estimation accuracy is summarized by RMSE over 100 replications.
We compare DCMA--ES with three alternatives. DCMA--WGR uses the same interventional reconstruction framework but learns conditional distributions using WGR~\cite{song2026wasserstein}. Linear Gaussian fits linear conditional models with Gaussian residual noise~\cite{vansteelandt2017interventional} and reconstructs interventional outcome distributions by parametric Gaussian sampling.
MedFlow is implemented as a flow-based conditional simulation baseline following Zhou and Wodtke~\cite{zhou2025causal}.

\begin{figure*}[t]
\centering
\includegraphics[width=0.95\textwidth]{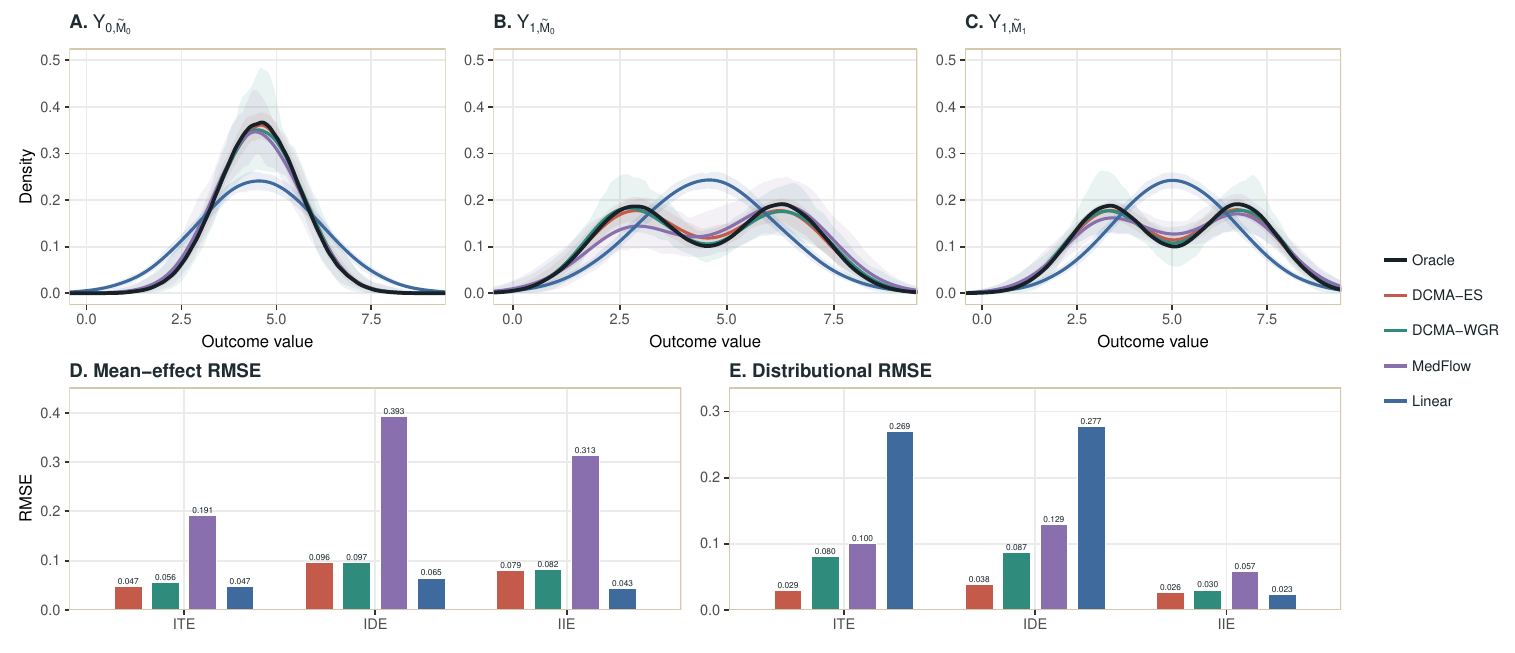}
\caption{
Synthetic bimodal experiment. Panels A--C compare oracle and method-specific estimated interventional outcome densities for \(Y_{0\tilde M_0}\), \(Y_{1\tilde M_0}\), and \(Y_{1\tilde M_1}\). Solid black curves denote oracle densities, colored curves denote estimated densities averaged across 100 replications, and shaded bands denote pointwise 95\% inter-replication intervals. Panels D--E compare DCMA--ES, DCMA--WGR, MedFlow, and Linear Gaussian by RMSE for mean based and ED based distributional effects over ITE, IDE, and
IIE. Lower values indicate better recovery.
}
\label{fig:s1_density}
\end{figure*}

Figure~\ref{fig:s1_density} summarizes the results. Panels A--C compare the oracle and estimated interventional densities across all methods. DCMA--ES and DCMA--WGR closely track the oracle densities and preserve the bimodal structure, whereas the Linear--Gaussian baseline is largely unimodal. Panels D--E report the corresponding RMSEs. DCMA--ES achieves the lowest ED RMSEs across ITE, IDE, and IIE, with DCMA--WGR giving similar distributional accuracy. For mean based effects, DCMA remains close to the Linear--Gaussian baseline and substantially outperforms MedFlow. Overall, this experiment shows that DCMA improves distributional recovery while retaining competitive accuracy for mean based mediation effects.

\paragraph{S2: Multiple dependent mediators} 

We next consider a multivariate mediation setting with \(S=5\) dependent
mediators. 

The mediators are generated as
\[
\bm{M}
=
0.5\,\bm{1}_5
+
\bm{b}_A A
+
\bm{b}_Z Z
+
\bm{\varepsilon}_M,
\qquad
\bm{\varepsilon}_M \sim \mathcal N(\bm{0}_5,\Sigma),
\]
where $\bm{1}_5$ and $\bm{0}_5$ denote the 5-dimensional vectors of ones and zeros,
and $\Sigma \in \mathbb{R}^{5\times 5}$ has entries
$\Sigma_{ij}=0.6^{|i-j|}$.
We set $\bm{b}_A=(1.0,\,0.8,\,0.6,\,0.4,\,0.2)$ and $\bm{b}_Z=(0.3,\,0.3,\,0.2,\,0.2,\,0.1)$.

The outcome is generated as
\[
Y
=
1+0.6A+0.2\,\bm 1_5^\top \bm M
+\sin(M_1M_2)+0.2Z+\varepsilon_Y,
\]
where \(\varepsilon_Y\sim N(0,1)\).  
All other settings are the same as in S1.
Table~\ref{tab:s2_dependent_mediators} shows that DCMA achieves low RMSEs for
mean based and ED based mediation effects in the presence of dependent
mediators and nonlinear mediator interactions in the outcome model.

\begin{table}[t]
\centering
\caption{Multiple dependent mediators experiment. Entries are RMSEs for mean based and ED based interventional mediation effect estimates over 100 replications.}
\label{tab:s2_dependent_mediators}
\small
\setlength{\tabcolsep}{4pt}
\renewcommand{\arraystretch}{1.08}
\resizebox{\columnwidth}{!}{%
\begin{tabular}{lccccccc}
\toprule
Metric
& IDE
& IIE
& IPSE$_1$
& IPSE$_2$
& IPSE$_3$
& IPSE$_4$
& IPSE$_5$ \\
\midrule
Mean RMSE
& 0.038 & 0.020 & 0.023 & 0.015 & 0.010 & 0.007 & 0.005 \\
ED RMSE
& 0.013 & 0.005 & 0.003 & 0.002 & 0.001 & 0.001 & 0.000 \\
\bottomrule
\end{tabular}%
}
\end{table}

\subsection{Semi-Synthetic IHDP Multi-Mediator Experiment}
\label{subsec:ihdp}

The third experiment evaluates whether DCMA can recover mediator-specific distributional patterns in a semi-synthetic multi-mediator setting. We use the IHDP dataset~\cite{hill2011bayesian,louizos2017causal} as the empirical base, retaining its binary treatment \(A\) and baseline covariates \(\bm Z\) while simulating three mediators and a continuous outcome from a known data generating mechanism. The mechanism is designed so that the three mediators affect distinct features of the conditional outcome distribution:
\[
\begin{aligned}
M_1 &\longrightarrow \mu
&&\text{(location channel)},\\
M_2 &\longrightarrow \log\sigma
&&\text{(scale channel)},\\
M_3 &\longrightarrow R
&&\text{(right-tail channel)}.
\end{aligned}
\]
Here, \(\mu\) denotes the conditional location, \(\sigma\) denotes the conditional scale, and \(R\) denotes the residual component that controls right-tail through a covariate dependent tail term. 
Full data-generating details are given in Appendix~\ref{app:ihdp-dgm}.

\begin{figure}[h]
\centering
\includegraphics[width=0.9\linewidth]{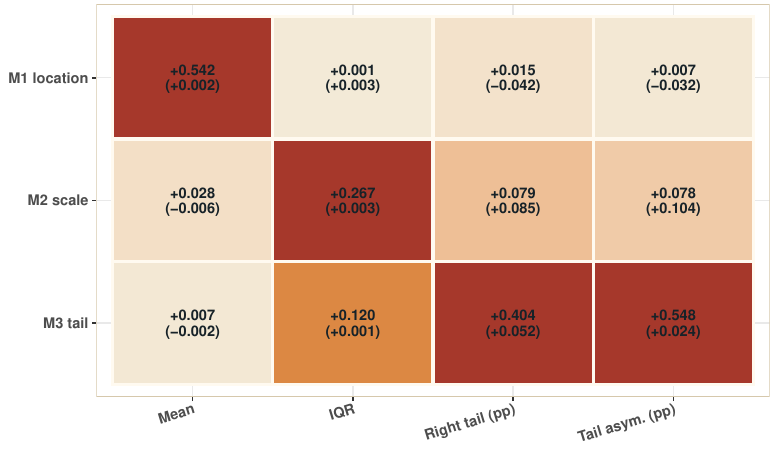}
\caption{
Semi-synthetic IHDP mediator-specific distributional patterns. Each cell reports the oracle value, with the DCMA estimation error in parentheses. The four summaries are mean shift, interquartile-range (IQR) change, right-tail probability change, and tail-asymmetry change. For each interventional outcome distribution, let \(Z_Y\) denote the standardized outcome. Right-tail probability is defined as \(P(Z_Y>2.5)\), and tail asymmetry is defined as \(P(Z_Y>2.5)-P(Z_Y<-2.5)\). Probability-based summaries are
reported in percentage points (pp).
}
\label{fig:ihdp_signature_heatmap}
\end{figure}

Figure~\ref{fig:ihdp_signature_heatmap} shows that DCMA closely tracks the oracle values and recovers the expected mediator-specific patterns. The \(M_1\)-specific intervention yields the largest mean shift, consistent with a location channel, whereas the \(M_2\)-specific intervention is most pronounced in the IQR summary and the \(M_3\)-specific intervention in the right-tail and tail-asymmetry summaries. 

\subsection{NHANES Liver Elastography Study}
\label{subsec:nhanes}

We apply DCMA to the 2017--2018 National Health and Nutrition Examination
Survey (NHANES) liver elastography data to examine how obesity, defined as BMI \(\ge 30\,\mathrm{kg/m^2}\), is associated with liver stiffness and whether this association is mediated by
metabolic pathways. The mediators are log HOMA-IR (homeostasis model assessment of insulin resistance) and the log TG/HDL-C ratio, used as markers of insulin resistance and dyslipidemia, respectively. Baseline covariates include age, sex, race, educational attainment, family income-to-poverty ratio, smoking status, alcohol use status, binge drinking, and physical activity. The final complete case analytic sample contains \(n=1,452\) participants, including 863 non-obese and 589 obese participants.

We summarize the reconstructed interventional outcome distributions by the mean difference in liver stiffness, the risk difference for exceeding \(8\,\mathrm{kPa}\)~\cite{berzigotti2021easl}, and ED. Uncertainty intervals are computed by bootstrap.

\begin{figure}[h]
\centering
\includegraphics[width=0.95\linewidth]{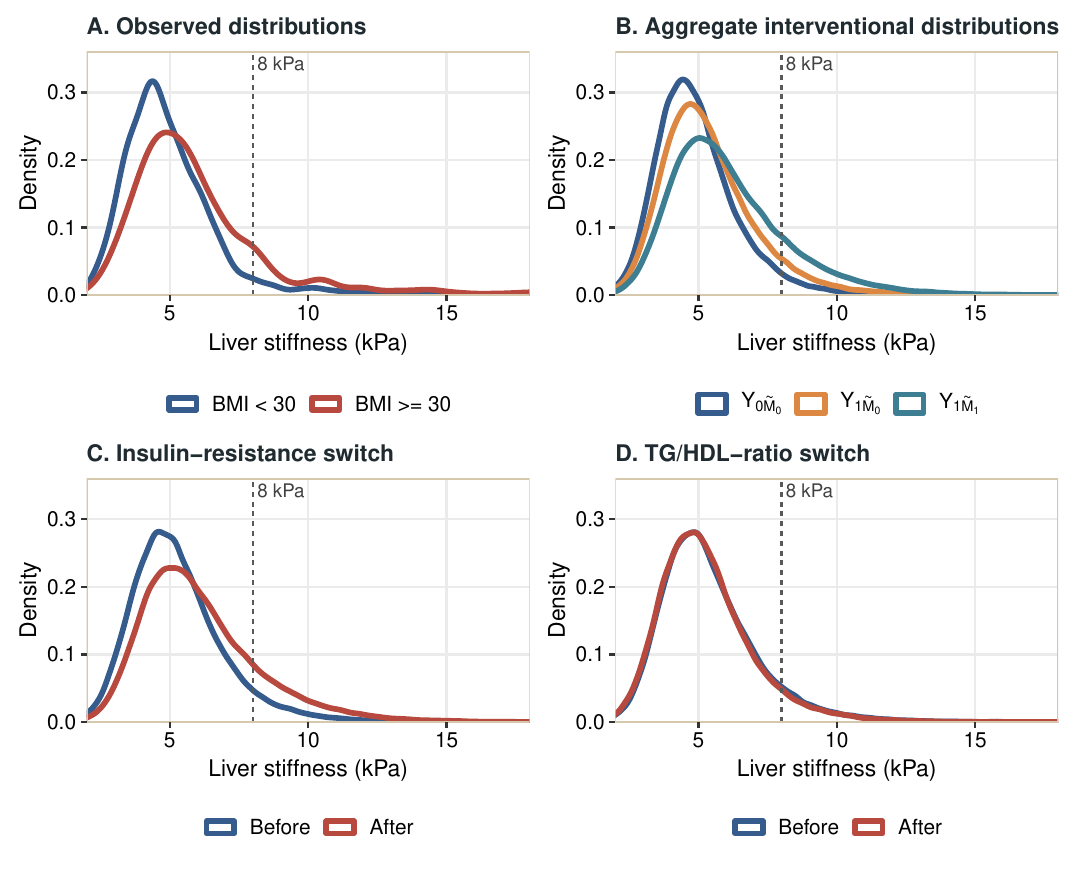}
\caption{
NHANES liver elastography study. Panel A shows observed liver stiffness distributions by obesity status, and Panel B shows reconstructed interventional distributions. Panels C--D show mediator-specific interventional reconstructions for the insulin-resistance and TG/HDL-ratio pathways. The vertical line marks the
\(8\) kPa threshold.
}
\label{fig:nhanes_density}
\end{figure}

Figure~\ref{fig:nhanes_density} shows that the obese group has a right-shifted liver stiffness distribution with a heavier upper tail. For the obesity comparison, the estimated ITE summaries are \(1.225\) kPa [0.790, 1.740] for mean liver stiffness, 12.1 percentage points [6.5, 18.0] for the exceedance risk difference, and 0.411 [0.176, 0.692] for ED (Table~\ref{tab:nhanes_effects}). The mediator-specific reconstructions suggest that the distributional shift is primarily aligned with the insulin-resistance pathway, with positive HOMA-IR path-specific estimates for both mean liver stiffness and exceedance risk. In contrast, the log TG/HDL-C pathway shows little distributional change, with mean and exceedance risk intervals covering zero.

\begin{table}[h]
\centering
\caption{
NHANES DCMA estimates. Entries are point estimates with bootstrap intervals.
Mean effects are in kPa, exceedance risk effects are risk differences for
$P(Y\ge 8\,\mathrm{kPa})$, and ED denotes energy distance.
}
\label{tab:nhanes_effects}
\small
\setlength{\tabcolsep}{4pt}
\renewcommand{\arraystretch}{1.08}
\resizebox{\columnwidth}{!}{%
\begin{tabular}{lccc}
\toprule
Effect
& Mean effect
& Exceedance risk
& ED \\
\midrule
ITE
& \shortstack{1.225\\{}[0.790, 1.740]}
& \shortstack{0.121\\{}[0.065, 0.180]}
& \shortstack{0.411\\{}[0.176, 0.692]} \\
IDE
& \shortstack{0.401\\{}[0.069, 0.722]}
& \shortstack{0.031\\{}[0.007, 0.068]}
& \shortstack{0.066\\{}[0.005, 0.160]} \\
IIE
& \shortstack{0.825\\{}[0.483, 1.348]}
& \shortstack{0.091\\{}[0.050, 0.130]}
& \shortstack{0.178\\{}[0.070, 0.347]} \\
HOMA-IR path
& \shortstack{0.856\\{}[0.558, 1.450]}
& \shortstack{0.091\\{}[0.052, 0.140]}
& \shortstack{0.193\\{}[0.090, 0.382]} \\
TG/HDL path
& \shortstack{$-0.030$\\{}[$-0.194$, 0.104]}
& \shortstack{$-0.001$\\{}[$-0.015$, 0.009]}
& \shortstack{0.002\\{}[0.000, 0.013]} \\
\bottomrule
\end{tabular}%
}
\end{table}

\subsection{Ablation Study}
\label{sec:ablation}

\paragraph{Outcome generator specification}
We first assess sensitivity to the outcome generator specification. With the
mediator generator and interventional reconstruction procedure fixed, we compare
three outcome generators:
\[
\begin{aligned}
Y
&=g_\theta(A,\bm M,\bm Z,\bm\varepsilon_Y),
&& \bm\varepsilon_Y\sim N(0,I),\\
Y
&=\mu_\theta(A,\bm M,\bm Z)
+\sigma_\theta(A,\bm M,\bm Z)\varepsilon,
&& \varepsilon\sim N(0,1),\\
Y
&=\mu_\theta(A,\bm M,\bm Z)+\sigma\varepsilon,
&& \varepsilon\sim N(0,1).
\end{aligned}
\]
They correspond to flexible noise injection, Gaussian location-scale noise, and homoskedastic Gaussian noise, respectively. As shown in Table~\ref{tab:ablation_noise}, the flexible generator yields the lowest ED RMSEs across IDE, IIE, and ITE, indicating improved recovery of the full interventional outcome distributions.

\begin{table}[h]
\centering
\caption{
Outcome-noise ablation in the synthetic bimodal setting. Entries are ED RMSEs over 100 replications.
}
\label{tab:ablation_noise}
\small
\setlength{\tabcolsep}{4pt}
\begin{tabular}{lccc}
\toprule
Outcome generator & IDE & IIE & ITE \\
\midrule
Flexible noise-injection & \textbf{0.038} & \textbf{0.026} & \textbf{0.029} \\
Gaussian location-scale  & 0.063 & 0.030 & 0.044 \\
Homoskedastic Gaussian   & 0.209 & 0.030 & 0.213 \\
\bottomrule
\end{tabular}%
\end{table}

\paragraph{Joint mediator modeling}
We next compare the proposed joint mediator generator with a variant that models the mediators separately in the synthetic multiple dependent mediators setting. The separate variant learns each marginal mediator distribution independently and combines the generated mediators during interventional reconstruction. 
As shown in Table~\ref{tab:joint-separate-ed-rmse}, the two specifications give similar ED RMSEs for the IPSEs, but the separate variant yields much larger ED RMSEs for the IIE and the ITE. 

\begin{table}[h] 
\centering 
\caption{Joint mediator modeling ablation in the synthetic multiple dependent mediators setting. Entries are ED RMSEs over 100 replications.} 
\label{tab:joint-separate-ed-rmse} 
\small 
\setlength{\tabcolsep}{4pt} 
\renewcommand{\arraystretch}{1.08} 
\resizebox{\columnwidth}{!}{%
\begin{tabular}{lcccccccc} 
\toprule 
Variant & ITE & IDE & IIE & IPSE1 & IPSE2 & IPSE3 & IPSE4 & IPSE5 \\ 
\midrule 
Joint & \textbf{0.020} & \textbf{0.013} & \textbf{0.005} & \textbf{0.003} & \textbf{0.002} & \textbf{0.001} & \textbf{0.001} & 0.000 \\ 
Separate & 0.043 & 0.015 & 0.014 & 0.004 & 0.002 & 0.001 & 0.001 & \textbf{0.000} \\ 
\bottomrule 
\end{tabular}%
} \end{table}

\section{Conclusion}

This paper introduced DCMA, a conditional generative framework for distributional causal mediation analysis with multiple mediators. DCMA extends interventional mediation analysis from mean level contrasts to interventional outcome distributions. By learning the mediator and outcome conditional distributions with noise driven generators and reconstructing interventional outcome distributions through
Monte Carlo forward simulation, the framework evaluates total, direct, indirect, and ordered path-specific interventional estimands through user-specified distributional functionals.

DCMA is useful when the scientific question concerns not only whether a pathway changes the outcome on average, but also how it changes the outcome distribution. Such questions arise in clinical and public health studies with tail risk or threshold based outcomes, environmental and policy studies with heterogeneous or tail sensitive responses, and omics studies where multiple
biological pathways may affect different features of the outcome distribution. In these settings, distributional mediation analysis provides a richer description of pathway-specific effects than a single summary contrast.

Several limitations should be noted. Like other mediation methods, DCMA relies on no unmeasured confounding for the treatment--mediator, treatment--outcome, and mediator--outcome relationships.
The credibility of this assumption depends on study design and covariate measurement, especially in observational applications. In addition, the current implementation focuses on cross-sectional mediators and outcomes. Extensions to longitudinal mediators and survival outcomes remain important directions for future work.

\appendices

\section{Role of Mediator Ordering in IPSE}
\label{app:ordering}

The IPSEs in the main text use an index based partition around the target mediator. This partition is a bookkeeping device for defining ordered IPSEs and does not require the mediators to follow a complete causal ordering. However, different mediator indexings may define different mediator-specific estimands. When a scientifically meaningful mediator ordering is available, the ordering can be chosen to reflect that structure. When no ordering is clearly preferred, one may report IPSEs across a small set of plausible orderings or use the order-averaged summary defined below.

For a pre-specified set \(\Pi\) of mediator orderings, define
\[
\overline{\mathrm{IPSE}}_{j}^{\Psi}
=
\frac{1}{|\Pi|}
\sum_{\pi\in\Pi}
\mathrm{IPSE}_{\pi,j}^{\Psi},
\]
where \(\pi\) denotes a permutation of the mediator indices and
\(\mathrm{IPSE}_{\pi,j}^{\Psi}\) is the ordered IPSE for mediator \(j\) under
that permutation. This average depends on the chosen set \(\Pi\) and is used
only as a descriptive sensitivity summary. In the main text, we use the given mediator indexing as a reference ordering
to present the IPSE construction.

\section{Semi-Synthetic IHDP Data Generating Mechanism}
\label{app:ihdp-dgm}

We retain the observed IHDP treatment indicator \(A_i\in\{0,1\}\) and use the first ten baseline covariates as \(\bm Z_i=(Z_{i1},\ldots,Z_{i10})\).

The mediators are generated as
\[
\begin{aligned}
M_{i1}
&=
0.10 + 0.55 A_i + 0.20 Z_{i1} - 0.15 Z_{i2}
+ 1.10\,\varepsilon_{i1},\\
M_{i2}
&=
-0.15 + 0.60 A_i + 0.10 Z_{i4}
+ 1.10\,\varepsilon_{i2},\\
M_{i3}
&=
-0.70 + 0.65 A_i + 0.08 Z_{i5}
+ 1.10\,\varepsilon_{i3},
\end{aligned}
\]
where
\[
\varepsilon_{i1},\varepsilon_{i2},\varepsilon_{i3}
\overset{\mathrm{i.i.d.}}{\sim}N(0,1).
\]

The outcome is generated as
\[
Y_i = \mu_i+\sigma_i R_i,
\]
where
\[
\begin{aligned}
\mu_i
&=
1.00 + 0.98 M_{i1} + 0.08 Z_{i1} - 0.06 Z_{i2},\\
\log\sigma_i
&=
0.55 M_{i2},\\
R_i
&=
U_i+
10\,\operatorname{expit}\{5(M_{i3}+0.05)\}\,g_i,
\qquad U_i\sim N(0,1),
\end{aligned}
\]
and
\[
g_i
=
\operatorname{expit}\{5(Z_{i5}-\tau_G)\}
-
\frac1n\sum_{\ell=1}^n
\operatorname{expit}\{5(Z_{\ell5}-\tau_G)\}.
\]
Here \(\tau_G\) is the empirical 0.88 quantile of
\(\{Z_{i5}:i=1,\ldots,n\}\), and
\(\operatorname{expit}(u)=\{1+\exp(-u)\}^{-1}\). The centered gate \(g_i\) is
positive mainly for subjects with large \(Z_{i5}\) and slightly negative for
most others. Thus, larger values of \(M_{i3}\) amplify positive residual values
mainly in a small high-\(Z_{i5}\) subgroup, producing a heavier right tail and
greater tail asymmetry.

\section{Implementation Details}
\label{app:implementation-details}

Table~\ref{tab:generator_architecture} summarizes the generator architectures used for the main experiments. Unless otherwise stated, all generators are trained with the ES objective using Adam with learning rate \(5\times 10^{-4}\), a 20\% validation split, and early stopping. The ES-based loss uses \(L_{\mathrm{ES}}=20\) generator draws.

\begin{table}[h]
\centering
\caption{
Generator architectures used in the main experiments.
The notation $L\times H$ denotes $L$ hidden layers with hidden width $H$.
}
\label{tab:generator_architecture}
\small
\setlength{\tabcolsep}{4pt}
\renewcommand{\arraystretch}{1.08}
\resizebox{\columnwidth}{!}{%
\begin{tabular}{lcccc}
\toprule
\multirow{2}{*}{Experiment}
& \multicolumn{2}{c}{Noise dimension}
& \multicolumn{2}{c}{Network architecture} \\
\cmidrule(lr){2-3}\cmidrule(lr){4-5}
& $\varepsilon_M$ & $\varepsilon_Y$
& Mediator model & Outcome model \\
\midrule
Synthetic S1
& 4 & 8
& $5\times 64$ & $5\times 64$ \\
Synthetic S2
& 4 & 8
& $5\times 64$ & $5\times 64$ \\
IHDP
& 4 & 32
& $5\times 64$ & $7\times 128$ \\
NHANES
& 4 & 8
& $3\times 64$ & $3\times 64$ \\
\bottomrule
\end{tabular}%
}
\end{table}


\end{document}